\newtheorem{theorem}{Theorem}
\newcommand\DCS{\mathrm{D}_{\textsc{cs}}}
\newcommand\Entropy{\mathrm{H}_2}
\newcommand\CrossEntropy{\mathrm{H}_2^\times}
\newcommand\manifold{\mathcal{V}}
\newcommand\tr{\mathrm{tr}}
\newcommand\R{\mathbb{R}}
\newcommand\nor{\mathcal{N}}
\newcommand\cov{\mathrm{cov}}
\newcommand\V{\mathrm{V}}
\newcommand\il[1]{\langle #1 \rangle}
\newcommand\minusOne{{_{-}}}
\newcommand\plusOne{{_{+}}}
\newcommand\for{\text{ for }}
\newcommand\X{\mathrm{X}}
\newcommand\Y{\mathrm{Y}}
\newcommand\x{\mathrm{x}}
\newcommand\y{\mathrm{y}}
\newcommand\de[1]{\llbracket  #1 \rrbracket}
\newcommand\fe[1]{\llbracket  #1 \rrbracket_{\nor}}
\newtheorem{optimizationproblem}{Optimization problem}
\newtheorem{observation}{Observation}
\begin{document}


\title{Maximum Entropy Linear Manifold\\for Learning Discriminative\\Low-dimensional  Representation}


%
%
\author{Wojciech Marian Czarnecki$^1$, Rafal Jozefowicz$^2$  and Jacek Tabor$^1$}
\date{
$^1$Faculty of Mathematics and Computer Science,\\
Jagiellonian University, Krakow, Poland\\
\{wojciech.czarnecki, jacek.tabor\}@uj.edu.pl\\
$^2$Google, New York, USA\\
rafjoz@gmail.com}
%


%
%

\maketitle

\begin{abstract}
Representation learning is currently a very hot topic in modern machine learning, mostly due to the great success of the deep learning methods.
In particular low-dimensional representation which discriminates classes can not only enhance the classification procedure, but also make it faster, while contrary to the high-dimensional embeddings can be efficiently used for visual based exploratory data analysis. 

In this paper we propose Maximum Entropy Linear Manifold (MELM), a multidimensional generalization of Multithreshold Entropy Linear Classifier model which is able to find a low-dimensional linear data projection maximizing discriminativeness of projected classes. As a result we obtain a linear embedding which can be used for classification, class aware dimensionality reduction and data visualization. MELM provides highly discriminative 2D projections of the data which can be used as a method for constructing robust classifiers.

We provide both empirical evaluation as well as some interesting theoretical properties of our objective function such us scale and affine transformation invariance, connections with PCA and bounding of the expected balanced accuracy error.
\end{abstract}

\section{Introduction}
Correct representation of the data, consistent with the problem and used classification method, is crucial for the efficiency of the machine learning models. 
In practice it is a very hard task to find suitable embedding of many real-life objects in $\mathbb{R}^d$ space used by most of the algorithms. 
In particular for natural language processing~\cite{levy2014neural}, cheminformatics~\cite{smusz2015fingerprint} or even image recognition tasks it is still an open problem. 
As a result there is a growing interest in methods of representation learning~\cite{goodfellow2014challenges}, suited for 
finding better embedding of our data, which may be further used for classification, clustering or other analysis purposes. 
Recent years brought many success stories, such as dictionary
learning~\cite{mairal2009online} or deep learning~\cite{hinton2006fast}. 
Many of them look for a sparse~\cite{geng2014local}, highly dimensional embedding which simplify linear separation at a cost of making visual analysis nontrivial.
A dual approach is to look for low-dimensional linear embedding, which has advantage of easy visualiation, interpretation and manipulation at a cost of much weaker (in terms of models complexity) space of transformations.

In this work we focus on the scenario where we are given labeled dataset in $\mathbb{R}^d$ and we are looking for such low-dimensional linear embedding which allows to easily distinguish each of the classes. In other words we are looking for a highly discriminative, low-dimensional representation of the given data.

\begin{wrapfigure}{r}{0.5\textwidth}
  \vspace{-20pt}
  \begin{center}
    \includegraphics[width=0.48\textwidth]{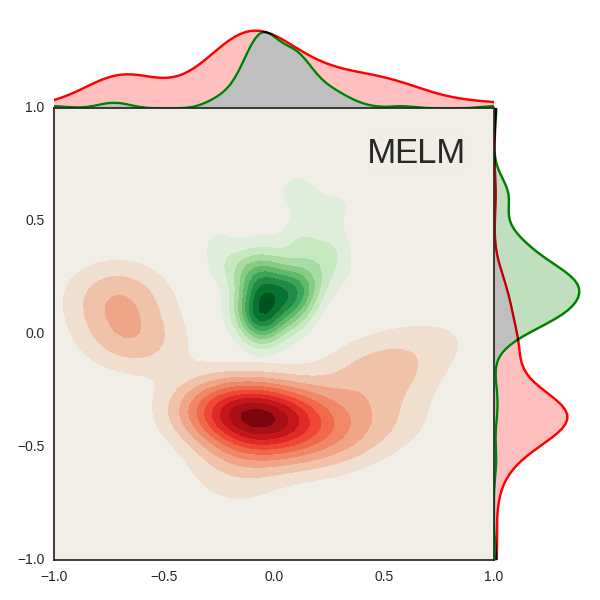}
  \end{center}
  \caption{Visualizatoin of \emph{sonar} dataset using Maximum Entropy Linear Manifold with $k=2$.}
  \label{fig:sonar}
  \vspace{-10pt}
\end{wrapfigure}

Our basic idea follows from the observation \cite{silverman1986density} that the density estimation is credible only in the low dimensional spaces. Consequently, we first project the data onto an arbitrary $k$-dimensional
affine submanifold $\manifold$ (where $k$ is fixed), and search for the $\manifold$ for which the estimated densities of the projected classes are orthogonal to each other, where  
the Cauchy-Schwarz Divergence is applied as a measure of discriminativeness of the projection, see Fig.~\ref{fig:sonar} for an example of such projection preserving classes' separation. The work presented in this paper is a natural extension of our earlier results~\cite{Czarnecki2015}, where we considered the one-dimensional case. However, we would like to underline that the used approach needed a nontrivial modification. 
In the one-dimensional case we could identify subspaces with elements of the unit sphere in a natural way. For higher dimensional subspaces such an identification is no longer possible.

To the authors best knowledge the presented idea is novel, and has not been earlier considered as a method of classification and data visualization.
As one of its benefits is the fact that it does not depend on 
affine rescaling of the data, which is a rare feature of the common
classification tools. What is also interesting, we show that as its simple limiting one-class case we obtain the classical PCA projection. Moreover, from the theoretical standpoint the Cauchy-Schwarz divergence factor can be decomposed into the fitting term, bounding the expected balanced misclassification error, and regularizing
term, simplifying the resulting model. We compute its value and derivative so one can use first-order optimization to find a solution
even though the true optimization should be performed on a Steifel manifold. Empirical tests show that such a method not only in some cases improves the classification score over learning from raw data but, more importantly, consistently finds highly discriminative representation which
can be easily visualized. In particular, we show that resulting projections' discriminativeness is much higher than many popular
linear methods, even recently proposed GEM model~\cite{GEM}.
For the sake of completness we also include the full source code of proposed method in the supplementary material.

\section{General idea}

In order to visualize dataset in $\mathbb{R}^d$ we need to project it onto $\mathbb{R}^k$ for very small $k$ (typically $2$ or $3$). One can use either linear transformation or some complex embedding, however choosing the second option in general leads to hard interpretability of the results. Linear projections have a tempting characteristics of being both easy to understand (from both theoretical perspective and practical implications of the obtained results) as well as they are highly robust in further application of this transformation. 

\begin{figure}[h]
\begin{tikzpicture}
\node[inner sep=0pt] (start) at (0,0)
    {\includegraphics[width=.3\textwidth]{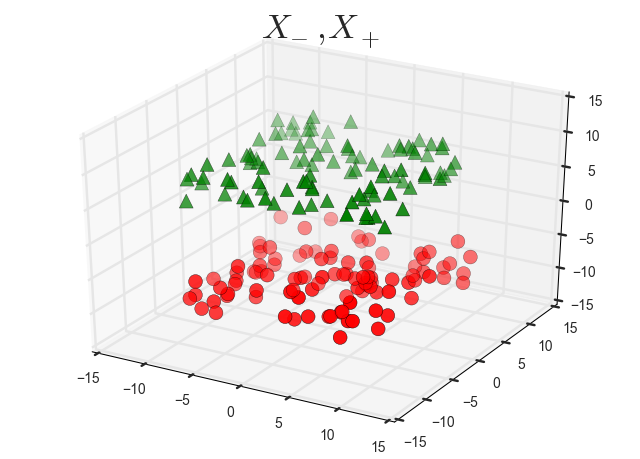}};
\node[inner sep=0pt] (proj1) at (3,1.5)
    {\includegraphics[width=.25\textwidth]{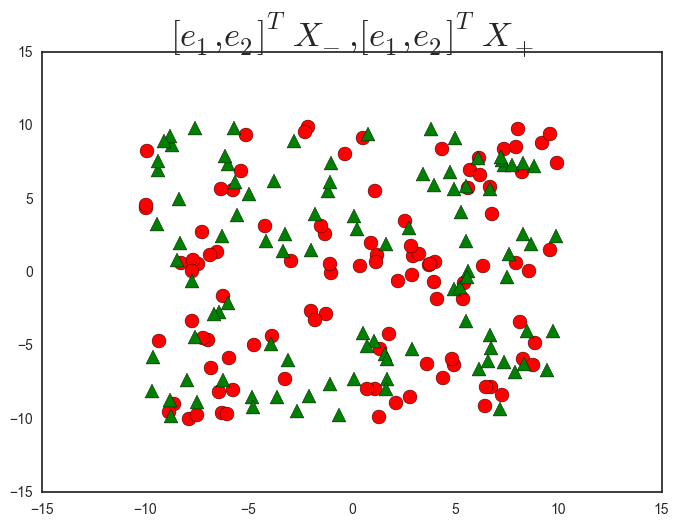}}    
    edge [->, thick] (start);

\node[inner sep=0pt] (proj2) at (3,0)
    {\;\;\;\;...}
    edge [<-, thick] (start);

\node[inner sep=0pt] (proj2) at (3,-1.5)
    {\includegraphics[width=.25\textwidth]{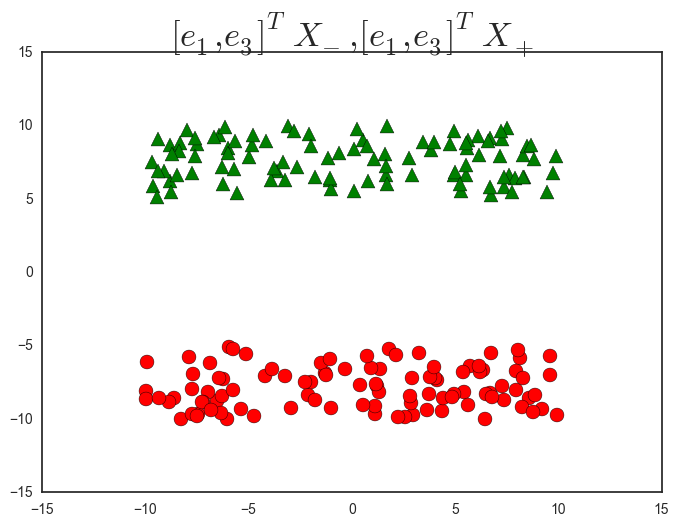}}
    edge [->, thick] (start);
\node[inner sep=0pt] (kde1) at (6.5,1.5)
    {\includegraphics[width=.25\textwidth]{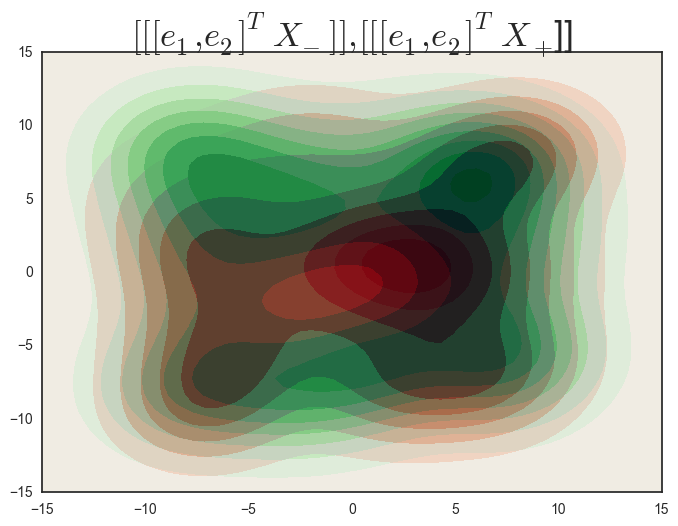}}
    edge [<-, thick] (proj1);
\node[inner sep=0pt] (kde2) at (6.5,-1.5)
    {\includegraphics[width=.25\textwidth]{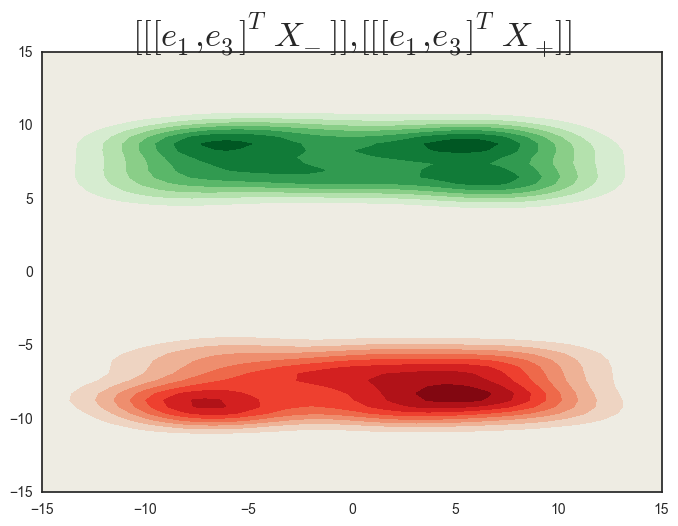}}
    edge [<-, thick] (proj2);
    
\node[inner sep=0pt] (dcs1) at (9,1.5)
    {\;small $\DCS$}
    edge [<-, thick] (kde1);
    
\node[inner sep=0pt] (dcs2) at (9,-1.5)
    {\;high $\DCS$}
    edge [<-, thick] (kde2);

\end{tikzpicture}
\caption{Visualization of the MELM idea. For given dataset $\X\minusOne, \X\plusOne$ we search through various linear projections $\V$ and analyze how divergent are their density estimations in order to select the most discriminative.}
\label{fig:elcvis}
\end{figure}

In this work we focus on such class of projections so in practise we are looking for some matrix $\V \in \mathbb{R}^{d \times k}$, such that for a given dataset $\X \in \mathbb{R}^{d \times N}$ projection $\V^T\X$ preserves as much of the important information about $\X$ as possible (sometimes additionally under additional constraints). The choice of the definition of \emph{information measure} $\mathrm{IM}$ together with the set of constraints $\varphi_i$ defines a particular reduction 
method.
\begin{equation*}
\begin{aligned}
& \underset{\V \in \mathbb{R}^{d \times k}}{\text{maximize}}
& & \mathrm{IM}(\V^T\X; \X, \Y) \\
& \text{subject to}
& & \varphi_i(\V), \; i = 1, \ldots, m.
\end{aligned}
\end{equation*}

There are many transformations which can achieve such results. For example, the well known Principal Component Analysis defines important information as data scattering so it looks for $\V$ which preserves as much of the $\X$ variance as possible and requires $\V$ to be orthogonal. In information bottleneck method one defines this measure as amount of mutual information between $\X$ and some additional $\Y$ (such as set of labels) which has to be preserved. Similar approaches are adapted in recently proposed Generalized Eigenvectors for Discriminative Features (GEM) where one tries to preserve the signal to noise ratio between samples from different classes. In case of Maximum Entropy Linear Manifold (MELM), introduced in this paper, important information is defined as the discriminativness of the samples from different classes with orthonormal $\V$. In other words we work with labeled samples (in general, binary labeled) and wish to preserve the ability to distinguish one class ($\X\minusOne$) from another 
($\X\plusOne$). 
In more formal terms, our optimization problem is to 
\begin{equation*}
\begin{aligned}
& \underset{\V \in \mathbb{R}^{d \times k}}{\text{maximize}}
& & \DCS(\de{\V^T\X\minusOne}, \de{\V^T\X\plusOne}) \\
& \text{subject to}
& &\V^T\V = I,
\end{aligned}
\end{equation*}
where $\DCS(\cdot,\cdot)$ denotes the Cauchy-Schwarz Divergence, the measure of how divergent are given probability distributions; $\de{ \cdot }$ denotes some density estimator which, given samples, returns a probability distribution. The general idea is also visualized on Fig.~\ref{fig:elcvis}.

\section{Theory}


We first discuss the one class case which has mainly introductory character as it shows the simplified version of our main idea. 

Suppose that we have unlabeled data $\X \subset \R^d$ and that we want 
to reduce the dimension of the data (for example to visualize it, reduce outliers, etc.) to $k < d$. One of the possible approaches is to use information theory
and search for such $k$-dimensional subspace $\manifold \subset \R^d$ for which the orthogonal projection of $\X$ onto $\manifold$ preserves as much information about $X$ as possible. 

One can clearly choose various measures of information. In our case, due to computational simplicity, we have decided to use Renyi's quadratic entropy, which for the density $f$ on $\R^k$ is given by
$$
\Entropy(f)=-\log \int_{\R^k} f^2(\x)d\x.
$$
One can equivalently use information potential~\cite{principe2000information}, which is given
as the $L^2$ norm of the density $\mathrm{ip}(f)=\int_{\R^k}f^2(\x)d\x$.
We need an easy observation that one can compute the Renyi's quadratic entropy for the normal density $\nor(m,\Sigma)$ in $\R^k$~\cite{cover2006elements}:
\begin{equation} \label{eq:enre}
\Entropy(\nor(m,\Sigma))=\tfrac{k}{2}\log(4\pi)+\tfrac{1}{2}\log(\det \Sigma).
\end{equation}
However, in order to compute the Renyi's quadratic entropy of the discrete data we first need to apply some density estimation technique.
By joining all the above mentioned steps together we are able to pose the basic optimization problem we are interested in.

\begin{optimizationproblem} 
Suppose that we are given data $\X$, and $k$ which denotes the dimension reduction. Find the orthonormal base $\V$ of the $k$-dimensional subspace\footnote{We identify those vectors with a linear space spanned over them.} $\manifold$ for which the value of
$$
\Entropy(\de{\V^T\X})
$$ 
is maximal, where $\de{\cdot}$ denotes a given fixed method of density estimation.
\end{optimizationproblem}

If we have data $\X$ with mean $m$ and covariance $\Sigma$ in $\R^d$ and $k$ orthonormal vectors $\V=[\V_1,\ldots,\V_k]$
then the we can ask what will be the mean and covariance of the orthogonal projection of $\X$ onto the space spanned by $\V$.
It is easy to show that it is given by $\V^Tm$ and $\V^T\Sigma \V$. In other words, if we consider data in the base given by 
orthonormal extension of $\V$ to the whole $\R^d$, the covariance of the projected data corresponds to the left upper $k \times k$ block submatrix of the original covariance.

We are going to show that if we apply the simplest density estimation of the underlying density for projected data given by the 
maximal likelihood estimator over the family of normal densities\footnote{That is for $A \subset \manifold$ we put $\fe{A}=\nor(m_A,\cov_A):\manifold \to \R_+$.}
then our optimization problem is equivalent to taking first $k$ elements of the base given by PCA.

\begin{theorem}
Let $\X \subset \R^d$ be a given dataset with mean $m$ and covariance $\Sigma$ and let $\fe{\cdot}$ denote the density estimation 
which returns the maximum likelihood estimator over Gaussian densities. Then
$$
\max\{\Entropy(\fe{\V^T\X}) : \V \in \R^{d \times k}, \V^T\V = I\}
$$
is realized for the first $k$ orthonormal vectors given by the PCA.
\end{theorem}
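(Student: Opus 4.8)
The plan is to reduce the maximization of the Renyi entropy to a purely spectral optimization over the Stiefel manifold, and then invoke an eigenvalue interlacing argument. First I would use the observation recorded just before the statement: the orthogonal projection of $\X$ onto the span of the orthonormal columns of $\V$ has mean $\V^Tm$ and covariance $\V^T\Sigma\V$, and since $\fe{\cdot}$ returns the maximum likelihood Gaussian fit we have $\fe{\V^T\X}=\nor(\V^Tm,\V^T\Sigma\V)$. Substituting into the closed form \eqref{eq:enre} for the Renyi quadratic entropy of a normal density on $\R^k$ gives
$$
\Entropy(\fe{\V^T\X})=\tfrac{k}{2}\log(4\pi)+\tfrac{1}{2}\log\det(\V^T\Sigma\V).
$$
Because the additive term $\tfrac{k}{2}\log(4\pi)$ does not depend on $\V$ and $\log$ is strictly increasing, maximizing the left-hand side over $\{\V:\V^T\V=I\}$ is equivalent to maximizing $\det(\V^T\Sigma\V)$ over the same constraint set.

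Next I would diagonalize the covariance. Write $\Sigma=U\Lambda U^T$ with $U$ orthogonal and $\Lambda=\mathrm{diag}(\lambda_1,\ldots,\lambda_d)$, $\lambda_1\geq\cdots\geq\lambda_d\geq 0$. Setting $W=U^T\V$ one checks $W^TW=\V^T\V=I$ and $\det(\V^T\Sigma\V)=\det(W^T\Lambda W)$, so the problem becomes: maximize $\det(W^T\Lambda W)$ over $W\in\R^{d\times k}$ with $W^TW=I$. Whatever $W$ realizes the maximum, the corresponding $\V=UW$ will be the sought base.

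The crux, and the main obstacle, is the bound $\det(W^T\Lambda W)\leq\prod_{i=1}^{k}\lambda_i$, with equality exactly when the range of $W$ is the span of the top $k$ coordinate axes. I would establish this via the Poincar\'e separation (interlacing) theorem: for symmetric $\Lambda$ and $W$ with orthonormal columns, the ordered eigenvalues $\mu_1\geq\cdots\geq\mu_k$ of $W^T\Lambda W$ satisfy $\mu_i\leq\lambda_i$ for every $i$, whence $\det(W^T\Lambda W)=\prod_{i=1}^{k}\mu_i\leq\prod_{i=1}^{k}\lambda_i$. An equivalent route is to write $\det(W^T\Lambda W)=\det\big((\Lambda^{1/2}W)^T(\Lambda^{1/2}W)\big)$ and expand by the Cauchy--Binet formula, bounding each $k\times k$ minor, but I find the interlacing argument cleaner for the equality analysis.

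Finally I would verify that the bound is attained. Taking $W$ (equivalently $\V=UW$) to have columns equal to the first $k$ standard basis vectors $e_1,\ldots,e_k$ gives $W^T\Lambda W=\mathrm{diag}(\lambda_1,\ldots,\lambda_k)$ and hence $\det(W^T\Lambda W)=\prod_{i=1}^k\lambda_i$, matching the upper bound. Translating back, $\V=U[e_1,\ldots,e_k]$ consists precisely of the first $k$ eigenvectors of $\Sigma$, i.e. the leading PCA directions, which completes the argument. A brief remark on the degenerate case $\lambda_k=\lambda_{k+1}$ — where the maximizer is no longer unique and any orthonormal basis of the relevant invariant subspace achieves the optimum — would round out the equality discussion.
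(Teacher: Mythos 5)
Your proposal is correct and follows essentially the same route as the paper: reduce via the closed-form entropy of a Gaussian to maximizing $\det(\V^T\Sigma\V)$ over orthonormal $\V$, then apply the Cauchy interlacing (Poincar\'e separation) theorem to bound the eigenvalues of the compression by those of $\Sigma$, with the maximum attained at the leading eigenvectors. Your version is merely more explicit — the diagonalization $\Sigma=U\Lambda U^T$, the verification of attainment, and the remark on the degenerate case $\lambda_k=\lambda_{k+1}$ are all details the paper's terser proof leaves implicit.
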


\begin{proof}
By the comments before and \eqref{eq:enre} we have
$$
\Entropy(\fe{\V^T\X})=\tfrac{k}{2}\log(4\pi)+\tfrac{1}{2}\log(\det (\V^T \Sigma \V)).
$$
In other words we search for these $\V$ for which the value of $\det(\V^T \Sigma \V)$ is maximized. Now by Cauchy interlacing theory \cite{bhatia1997matrix} eigenvalues of $\V^T\Sigma \V$ (ordered decreasingly) are bounded above by the eigenvalues of $\Sigma$. Consequently, the maximum is obtained in the case when $\V$ denotes the orthonormal eigenvectors of $\Sigma$ corresponding to the biggest eigenvalues of $\Sigma$. However, this is exactly the first $k$ elements of the orthonormal base constructed by the PCA.
\end{proof}

\noindent Using analogous reasoning we can also prove the dual result.
\begin{theorem}
\label{th:h2min}
For $\X, m, \Sigma, \fe{\cdot}$ and $k$ as in the previous theorem
$$
\min\{\Entropy(\fe{\V^T\X}) : \V \in \R^{d \times k}, \V^T\V = I\}
$$
is realized for the last $k$ orthonormal vectors defined by the PCA.
\end{theorem}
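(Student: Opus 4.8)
The plan is to mirror the proof of the preceding theorem, exploiting the fact that minimizing and maximizing $\Entropy(\fe{\V^T\X})$ differ only in the direction in which we drive the determinant $\det(\V^T\Sigma\V)$. First I would invoke the same closed form,
$$
\Entropy(\fe{\V^T\X})=\tfrac{k}{2}\log(4\pi)+\tfrac{1}{2}\log(\det (\V^T \Sigma \V)),
$$
which, since $\log$ is increasing and the first summand is constant, reduces the problem to minimizing $\det(\V^T\Sigma\V)$ over orthonormal $\V \in \R^{d\times k}$.

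The key step is to use the lower half of the Cauchy interlacing inequalities instead of the upper half applied before. Writing the eigenvalues of $\Sigma$ as $\lambda_1 \ge \cdots \ge \lambda_d$ and those of $\V^T\Sigma\V$ as $\mu_1 \ge \cdots \ge \mu_k$, interlacing gives $\mu_i \ge \lambda_{i+d-k}$ for each $i$ (this is legitimate because, after extending $\V$ to a full orthonormal base, $\V^T\Sigma\V$ is a principal $k\times k$ submatrix of $\Sigma$ in that base, exactly as noted before the first theorem). Taking products then yields
$$
\det(\V^T\Sigma\V)=\prod_{i=1}^k \mu_i \ge \prod_{i=1}^k \lambda_{i+d-k}=\lambda_{d-k+1}\cdots\lambda_d,
$$
so the determinant is bounded below by the product of the $k$ smallest eigenvalues of $\Sigma$.

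Finally I would verify that this lower bound is attained by the last $k$ PCA vectors: if $\V$ collects orthonormal eigenvectors of $\Sigma$ associated with the smallest $k$ eigenvalues, then $\V^T\Sigma\V$ is diagonal with entries $\lambda_{d-k+1},\ldots,\lambda_d$, so its determinant equals exactly $\lambda_{d-k+1}\cdots\lambda_d$, matching the bound and identifying the minimizer with the trailing $k$ elements of the PCA base. I do not expect a genuine obstacle, since the argument is fully dual to the previous one; the only point requiring care is recording the interlacing inequalities in the correct orientation, so that the product is bounded below rather than above and the extremal configuration is the trailing rather than the leading eigenspace.
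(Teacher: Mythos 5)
Your proposal is correct and is precisely the ``analogous reasoning'' the paper invokes without spelling out: the same closed-form entropy expression, with the lower half of the Cauchy interlacing inequalities ($\mu_i \ge \lambda_{i+d-k}$, valid as products of nonnegative eigenvalues of the covariance $\Sigma$) replacing the upper half, and attainment checked at the trailing eigenvectors. No gap; this matches the paper's intended argument.
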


As a result we obtain some general intuition that maximization of the Renyi's quadratic entropy leads to the selection of highly spreaded data, while its minimization selects projection where image is very condensed.


Let us now proceed to the binary labeled data. Recall that $\DCS$ can be equivalently expressed in terms of Renyi's quadratic entropy ($\Entropy$) and Renyi's quadratic cross entropy ($\CrossEntropy$):
$$
\begin{aligned}
\DCS(\V) &
=\log \int \de{\V^T\X\plusOne}^2+
\log \int \de{\V^T\X\minusOne}^2-2 \log \int \de{\V^T\X\plusOne}\de{\V^T\X\minusOne} \\[0.5ex]
&= - \Entropy(\de{\V^T\X\minusOne}) - \Entropy(\de{\V^T\X\plusOne}) + 2\CrossEntropy(\de{\V^T\X\plusOne}, \de{\V^T\X\minusOne}).
\end{aligned}
$$
Let us recall that our optimization aim is to find
a sequence $\V$ consisting of $k$ orthonormal vectors for which $\DCS(\V)$ is maximized.

\begin{observation}
Assume that the density estimator $\de{\cdot}$
does not change under the affine change of the coordinate system\footnote{This happens in particular for the kernel density estimation we apply in the paper}. One can show, by an easy modification of the theorem by Czarnecki and Tabor~\cite[Theorem 4.1]{Czarnecki2015}, that the maximum of $\DCS(\cdot)$ is independent of the affine change of data. Namely, for an arbitrary affine invertible map $M$ we have:
$$
\begin{aligned}
&\max\{\DCS(\V;\X\plusOne,\X\minusOne): \V \text{ orthonormal}\}\\[0.5ex]
=&\max\{\DCS(\V;\X\plusOne,\X\minusOne): \V \text{ linearly independent}\}\\[0.5ex]
=&\max\{\DCS(\V;M\X\plusOne,M\X\minusOne): \V \text{ orthonormal}\}.
\end{aligned}
$$
\end{observation}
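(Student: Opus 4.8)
The plan is to reduce the whole statement to one invariance property of the Cauchy--Schwarz divergence, and then to use that $\DCS(\V;\X\plusOne,\X\minusOne)$ in fact depends only on the linear subspace spanned by the columns of $\V$. First I would isolate the key lemma: for any invertible linear map $T$ on $\R^k$ and any shift vector $t$, pushing both argument densities forward through $\x\mapsto T\x+t$ leaves $\DCS$ unchanged. Using the decomposition recalled in the excerpt, $\DCS=\log\int p^2+\log\int q^2-2\log\int pq$ with $p=\de{\V^T\X\plusOne}$ and $q=\de{\V^T\X\minusOne}$, a change of variables shows that each of the three integrals is multiplied by the same factor $1/|\det T|$; since the coefficients $1+1-2$ sum to zero these factors cancel, and a translation affects none of the integrals at all. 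Because the estimator $\de{\cdot}$ is assumed invariant under affine changes of coordinates, replacing $\V$ by $\V A$ for an invertible $k\times k$ matrix $A$ (which turns $\V^T\X$ into $A^T\V^T\X$, i.e. applies the single linear map $A^T$ to both classes) therefore leaves $\DCS(\V)$ unchanged. In other words $\DCS(\V;\X\plusOne,\X\minusOne)$ is a function of the column space of $\V$ alone.

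The first equality is then immediate from a QR factorization. Given any $\V$ with linearly independent columns, write $\V=QR$ with $Q$ orthonormal and $R$ invertible; by the invariance above $\DCS(\V)=\DCS(Q)$. As the orthonormal frames are a subset of the linearly independent matrices, every value attained on the larger set is already attained on the smaller, so the two maxima coincide.

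For the second equality I would write the affine map as $M\x=B\x+c$ with $B$ invertible, so that $\V^T M\X=(B^T\V)^T\X+\V^T c$, where the added shift $\V^T c$ is common to both classes. The lemma then gives $\DCS(\V;M\X\plusOne,M\X\minusOne)=\DCS(B^T\V;\X\plusOne,\X\minusOne)$. This is the genuinely $k$-dimensional point and the step I expect to be the main obstacle, since it is exactly where the one-dimensional identification of subspaces with points of the unit sphere breaks down: I must argue that, as $\V$ runs over all orthonormal frames, the column space of $B^T\V$ sweeps the entire Grassmannian of $k$-planes. This follows because $B^T$, being invertible, induces a bijection $\manifold\mapsto B^T(\manifold)$ of the $k$-dimensional subspaces onto themselves, so for an optimal plane $\manifold^\ast$ one may take an orthonormal basis $\V$ of $(B^T)^{-1}(\manifold^\ast)$, whereupon $B^T\V$ spans exactly $\manifold^\ast$. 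Combining this surjectivity with the column-space-only dependence shows that the maximum of $\DCS(B^T\V;\X\plusOne,\X\minusOne)$ over orthonormal $\V$ equals its maximum over all linearly independent matrices, which by the first equality is again the maximum over orthonormal $\V$. Chaining the two equalities yields the claim.
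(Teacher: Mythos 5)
Your proof is correct. Note, however, that the paper itself gives no argument for this Observation: it defers entirely to an ``easy modification'' of the one-dimensional result of Czarnecki and Tabor~\cite[Theorem 4.1]{Czarnecki2015}, whose core is the same cancellation you isolate --- under a simultaneous affine pushforward of both densities, each of the three integrals in $\log\int p^2+\log\int q^2-2\log\int pq$ picks up the factor $1/|\det T|$, and the coefficient sum $1+1-2=0$ kills it in the logarithms. What you add, and what the paper only alludes to in its introduction (the remark that in one dimension subspaces can be identified with points of the unit sphere, while ``for higher dimensional subspaces such an identification is no longer possible''), is precisely the $k$-dimensional bookkeeping: the reduction of $\DCS$ to a function of the column space of $\V$ alone via the estimator's affine equivariance, the QR factorization replacing scalar normalization to obtain the first equality, and the bijection $\manifold\mapsto B^T\manifold$ on the Grassmannian of $k$-planes replacing the unit-sphere argument to obtain the second (taking an orthonormal basis of $(B^T)^{-1}(\manifold^\ast)$ is exactly the missing ingredient, and you correctly identified it as the step where the one-dimensional argument breaks down). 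So your write-up is not merely correct; it is a self-contained proof of a statement the paper leaves as a citation, and it makes explicit the nontrivial modification the paper claims is needed but never exhibits.
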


The above feature, although typical in the density estimation, is rather uncommon in modern classification tools. 

Similarly to the one-dimensional case, when $\V \in \mathbb{R}^d$, we can decompose the objective function into fitting and regularizing terms:
$$
\DCS(\V) = \underbrace{2\CrossEntropy(\de{\V^T\X\plusOne}, \de{\V^T\X\minusOne})}_\text{fitting term} - \underbrace{(\Entropy(\de{\V^T\X\minusOne}) + \Entropy(\de{\V^T\X\plusOne}))}_\text{regularizing term}.
$$
Regularizing term has a slightly different meaning than in most of the machine learning models. Here it controls number of disjoint regions which appear after performing density based classification in the projected space. For one dimensional case it is a number of thresholds in the multithreshold linear classifier, for $k=2$ it is the number of disjoint curves defining decision boundary, and so on. Renyi's quadratic entropy is minimized when each class is as condensed as possible (as we show in Theorem~\ref{th:h2min}), intuitively resulting in a small number of disjoint regions.

It is worth noting that, despite similarities, it is not the common classification objective which can be written as an additive loss function and a regularization term
$$
L(\V) = \sum_{i=1}^N \ell(\V^T\x_i, \y_i, \x_i) + \mathrm{\Omega}(\V),
$$
as the error depends on the relations between each pair of points instead of each point independently. One can easily prove that there are no $\ell, \Omega$ for which $\DCS(v) = L(\V; \ell, \mathrm{\Omega})$. Such choice of the objective function might lead to the lack of connections with optimization of any reasonable accuracy related metric, as those are based on the point-wise loss functions. However it appears that $\DCS$ bounds the expected balanced accuracy\footnote{Accuracy with class priors being ignored $\text{BAC} = \tfrac{1}{2}\left ( \tfrac{\text{TP}}{\text{TP}+\text{FN}}  + \tfrac{\text{TN}}{\text{TN} + \text{FP}} \right )$.} similarly to how hinge loss bounds 0/1 loss. This can be formalized in the following way.

\begin{theorem}
 Negative log-likelihood of balanced misclassification in $k$-dimensional linear projection of any non-separable densities $f_\pm$ onto $\V$ is bounded by half of the Renyi's quadratic cross entropy of these projections.
\end{theorem}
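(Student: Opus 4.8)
The plan is to make the probabilistic content of ``balanced misclassification'' explicit and then collapse the whole statement onto a single pointwise inequality between the overlap of the two projected densities and their product. Throughout I would write $f\plusOne=\de{\V^T\X\plusOne}$ and $f\minusOne=\de{\V^T\X\minusOne}$ for the two densities on $\R^k$ produced after the projection, so that the relevant quantity on the right is $\CrossEntropy(f\plusOne,f\minusOne)=-\log\int f\plusOne f\minusOne$, and every estimate lives already in the low-dimensional projected space (the constraint $\V^T\V=I$ and the ambient dimension $d$ then need no separate treatment).

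First I would fix the density-based balanced classifier: a point $x$ is assigned to the class with the larger projected density, i.e.\ to $+$ on the set where $f\plusOne>f\minusOne$ and to $-$ on the set where $f\minusOne>f\plusOne$. A positive example is misclassified exactly on the second set and a negative one exactly on the first. Since on the region $f\minusOne>f\plusOne$ one has $f\plusOne=\min(f\plusOne,f\minusOne)$, and symmetrically on the other region, the misclassified mass of \emph{either} class is governed by the same ``misclassification density'' $\min(f\plusOne,f\minusOne)$; because the two class error rates enter balanced accuracy with equal weight, this identification is precisely what makes the argument independent of the class priors.

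The heart of the proof is the elementary pointwise estimate $\min(a,b)^2\le ab$ for $a,b\ge 0$ (equivalently $\min(a,b)\le\sqrt{ab}$), a one-line Cauchy--Schwarz-type bound. Applying it with $a=f\plusOne(x)$, $b=f\minusOne(x)$ and integrating over $\R^k$ gives
\[
\int \min(f\plusOne,f\minusOne)^2 \;\le\; \int f\plusOne f\minusOne .
\]
Taking $-\tfrac12\log$ of both sides (a decreasing transformation) turns the right-hand side into $\tfrac12\CrossEntropy(f\plusOne,f\minusOne)$ and the left-hand side into the negative log-likelihood attached to the misclassification density, which yields the claimed bound; this is the natural direction for the $\DCS$-maximization narrative, since raising the cross-entropy term forces the misclassification likelihood down. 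The non-separability hypothesis is used exactly here: it guarantees $\int f\plusOne f\minusOne>0$ and that the overlap density is nonzero, so both logarithms are finite and the inequality is non-vacuous, whereas in the separable case both sides degenerate to $+\infty$.

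I expect the difficulty to be bookkeeping rather than cleverness. The main obstacle is formalizing ``balanced misclassification likelihood'' as the information potential $\int\min(f\plusOne,f\minusOne)^2$ and extracting the constant consistently: one must split $\R^k$ according to the sign of $f\plusOne-f\minusOne$, track which density realizes the minimum on each piece, and observe that the stated factor comes from the square root (the $\tfrac12$ in front of the logarithm) and not from the balanced-accuracy $\tfrac12$. The remaining steps --- monotonicity of $-\log$ and the reduction to the projected densities --- are routine, so once the misclassification density is correctly identified with $\min(f\plusOne,f\minusOne)$ the inequality follows immediately from $\min(a,b)^2\le ab$.
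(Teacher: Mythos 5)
There is a genuine gap, and it sits exactly where you predicted the ``bookkeeping'' would be: in the formalization of the balanced misclassification likelihood. The likelihood of misclassifying under the balanced density classifier is the probability mass of the overlap, $\int \min\{f_+(\x),f_-(\x)\}\,d\x$ --- this is what the paper uses (taken over the unit hypercube $[0,1]^k$) --- not the information potential $\int \min\{f_+,f_-\}^2\,d\x$. Your quantity $\bigl(\int\min\{f_+,f_-\}^2\bigr)^{1/2}$ is the $L^2$ norm of the overlap density; it is not the probability of any event, so $-\tfrac12\log\int\min\{f_+,f_-\}^2$ is not a negative log-likelihood of misclassification. By squaring the minimum inside the integral you prove a true inequality, but it is a different statement from the theorem.

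With the correct identification the gap becomes quantitative, not merely semantic. The pointwise bound $\min(a,b)\le\sqrt{ab}$ gives only $\int\min\{f_+,f_-\}\le\int\sqrt{f_+f_-}$, and over all of $\R^k$ this does not imply the claim: take $f_+=f_-=f$ uniform on a box of volume $M>1$ (certainly non-separable). Then
$$
-\log\int_{\R^k}\min\{f_+,f_-\}\,d\x=-\log 1=0,
\qquad
\tfrac12\CrossEntropy(f_+,f_-)=-\tfrac12\log\int f^2=\tfrac12\log M>0,
$$
so the inequality fails on $\R^k$. This is precisely why the paper states the likelihood over the hypercube $[0,1]^k$ and why its proof (following the reasoning it cites from the consistency paper) needs ``Cauchy and other basic inequalities'': after $\min\{f_+,f_-\}\le\sqrt{f_+f_-}$ one applies the Cauchy--Schwarz inequality against the constant function $1$,
$$
\int_{[0,1]^k}\sqrt{f_+f_-}\,d\x \;\le\; \Bigl(\int_{[0,1]^k} f_+f_-\,d\x\Bigr)^{1/2}\Bigl(\int_{[0,1]^k} 1\,d\x\Bigr)^{1/2},
$$
and it is the unit volume of $[0,1]^k$ that makes the second factor equal to $1$, producing the factor $\tfrac12$ after taking logarithms. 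Your proposal skips both the domain restriction and this Cauchy--Schwarz step, which together are the actual content of the theorem; the pointwise inequality alone only yields your weaker, reinterpreted statement.
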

\begin{proof}
 Likelihood of balanced misclassification over a $k$-dimensional hypercube after projection through $\V$ equals
 $$
 \int_{[0,1]^k} \min \{ (\V^Tf_+)(\x) , (\V^Tf_-)(\x)  \}d\x.
 $$
 
 Using analogous reasoning to the one presented by Czarnecki~\cite{Czarnecki2015cons}, using Cauchy and other basic inequalities, one can show that
 $$
 - \log \int_{[0,1]^k} \min \{ (\V^Tf_+)(\x) , (\V^Tf_-)(\x)  \}d\x \geq \tfrac{1}{2} \CrossEntropy (\V^Tf_+, \V^Tf_-).
 $$
 \qed
\end{proof}
As a result we might expect that maximizing of the $\DCS$ leads to the selection of the projection which on one hand maximizes the balanced accuracy over the training set (minimizes empirical error) and on the other fights with overfitting by minimizing the number of disjoint classification regions (minimizes model complexity).

\section{Closed form solution for objective and its gradient}

Let us now investigate more practical aspects of proposed approach. We show the exact formulas of both $\DCS$ and its gradient as functions of finite, labeled samples (binary datasets)
so one can easily plug it in to any first-order optimization software.


Let $\X\plusOne,\X\minusOne$ be fixed subsets of $\R^d$.
Let $\manifold$ denote the $k$-dimensional subspace generated
by $\V=[\V_1,\ldots,\V_k] \in \R^{d \times k}$ (we consider only the case when the 
sequence $\V$ is linearly independent). We project sets $\X_\pm$ orthogonally
on $\V$, and compute the Cauchy-Schwarz Divergence 
of the kernel density estimations (using Silverman's rule) of 
the resulting projections:
$$
\mathrm{G}^{-1}(\V)\de{\V^T\X\plusOne} \text{ and }\mathrm{G}^{-1}(\V)\de{\V^T\X\minusOne},
$$
where $\mathrm{G}(\V)=\V^T\V$ denotes the grassmanian. We search for such $\V$ for which the
Cauchy-Schwarz Divergence is maximal. Recall that the scalar product in the space of matrices is given by $\il{\V_1,\V_2}=\tr(\V_1^T\V_2)$.

There are basically two possible approaches one can apply: either search for the solution in the set of orthonormal $\V$ which generate $\manifold$, 
or allow all $\V$ with a penalty function. The first method is possible\footnote{And has advantage of having smaller number of parameters.}, 
but does not allow use of most of the existing numerical libraries as the space we work in is highly nonlinear. This is the reason why we use the
second approach which we describe below.

Since, as we have observed in the previous section, the
result does not depend on the affine transformation of data, we
can restrict to the analogous formula for the sets
$$
\V^T\X\plusOne \text{ and }\V^T\X\minusOne,
$$
where $\V$ consists of linearly independent vectors.
Consequently, we need to compute the gradient of the function
$$
\begin{aligned}
\DCS(\V) &=\DCS(\de{\V^T\X\plusOne},\de{\V^T\X\minusOne}) \\[0.5ex]
&=\log \int \de{\V^T\X\plusOne}^2+
\log \int \de{\V^T\X\minusOne}^2-2 \log \int \de{\V^T\X\plusOne}\de{\V^T\X\minusOne},
\end{aligned}
$$
where we consider the space consisting only of linearly independent vectors. 
Since as the base of the space $\V$ we can always take orthonormal vectors, the maximum
is realized for orthonormal sequence, and therefore we can add a penalty term for being 
non-orthonormal sequence, which helps avoiding numerical instabilities:
$$
\DCS(\V)-\|\V^T\V-I\|^2,
$$
where as we recall the sequence $\V$ is orthonormal iff $\V^T\V=I$. We denote above augmented $\DCS$ by the \emph{maximum entropy linear manifold} objective function
\begin{equation} \label{ref:eq0}
\mathrm{MELM}(\V) = \DCS(\V)-\|\V^T\V-I\|^2.
\end{equation}

Besides $\mathrm{MELM}(\cdot)$ value we need the formula for its gradient $\nabla \mathrm{MELM}(\cdot)$. 
For the second term we obviously have
$$
\nabla \|\V^T\V-I\|^2=4\V\V^T\V-4\V.
$$

We consider the first term. Let us first provide the formula for the computation
of the product of kernel density estimations of two sets.

Assume that we are given set $A \subset \manifold$ (in our case $A$ will be the projection of $\X_\pm$ onto $V$), where $\manifold$ is $k$-dimensional. 
Then the formula for the kernel density estimation with Gaussian kernel, is given by~\cite{silverman1986density}:
$$
\de{A}=\frac{1}{|A|} \sum_{a \in A}\nor(a,\Sigma_A),
$$
where $\Sigma_A={(h^\gamma_A)}^2\cov_A$ and (for $\gamma$ being a scaling hyperparameter~\cite{Czarnecki2015}) 
$h^\gamma_A=\gamma(\tfrac{4}{k+2})^{1/(k+4)}|A|^{-1/(k+4)}$.

Now we need the formula for $\int \de{A}\de{B}$, which is calculated~\cite{Czarnecki2015}
with the use of 
$$
\int \nor(a,\Sigma_A) \nor(b,\Sigma_B)=\nor(a-b,\Sigma_A+\Sigma_B)(0).
$$
Then we get
$$
\begin{aligned}
\int \de{A}\de{B} &=\frac{1}{|A||B|} \sum_{w \in A-B}\nor(w,\Sigma_{A}+\Sigma_B)(0)\\
&=\frac{1}{(2\pi)^{k/2}\det^{1/2}(\Sigma_{AB})|A||B|} \sum_{w \in A-B} \exp(-\tfrac{1}{2}\|w\|^2_{\Sigma_{AB}}),
\end{aligned}
$$
where $A-B=\{a-b:a \in A,b \in B\}$ and $\Sigma_{AB}$ is defined by
$$
\begin{aligned}
\Sigma_{AB} &={(h^\gamma_{A})}^2 \cov_{A}+{(h^\gamma_{B})}^2 \cov_{B}\\
&= \gamma^2(\tfrac{4}{k+2})^{2/(k+4)}(|A|^{-2/(k+4)}\cov_{A}+|B|^{-2/(k+4)}\cov_{B}).
\end{aligned}
$$
For a sequence $\V=[\V_1,\ldots,\V_k] \in \R^{d \times k}$ of linearly independent vectors we put
$$
\Sigma_{AB}(\V)=\V^T\Sigma_{AB}\V \text{ and } S_{AB}(\V)=\Sigma_{AB}(\V)^{-1}.
$$
Observe that $\Sigma_{AB}(\V)$ and $S_{AB}(\V)$ are
square symmetric matrices which represent the properties of the projection of the data onto the space spanned over $\V$.
We put 
$$
\phi_{AB}(\V)=\frac{1}{(2\pi)^{k/2}\det^{1/2}(\Sigma_{AB}(\V))|A||B|},
$$
thus
$$
\nabla \phi_{AB}(\V)=-\phi_{AB}(\V) \cdot \Sigma_{AB} \cdot  \V  \cdot S_{AB}(\V).
$$
Consequently to compute the final formula, we need the gradient of the function $\V \to \det(\Sigma_{AB}(\V))$, 
which as one can easily verify, is given by the formula
\begin{equation} \label{eq2}
\nabla \det(\Sigma_{AB}(\V))=2\det(\V^T\Sigma_{AB} \V)
\cdot \Sigma_{AB} \V (\V^T\Sigma_{AB} \V)^{-1}.
\end{equation}
One can also easily check that for 
$$
\psi^w_{AB}(\V)=\exp(-\tfrac{1}{2}\|\V^Tw\|^2_{\Sigma_{AB}(\V)}),
$$
where $w$ arbitrarily fixed, we get
$$
\nabla \psi^w_{AB}(\V)
=-\psi^w_{AB}(\V) \cdot (ww^T \V S_{AB}(\V)-\Sigma_{AB}  \V 
S_{AB}(\V) \V^T w w^T \V S_{AB}(\V)).
$$ 

To present the final form for the gradient of $\DCS(\V)$ we need
the gradient of the cross information potential 
$$
\begin{aligned}
\text{ip}^\times_{AB}(\V)&=\phi_{AB}(\V) \sum \limits_{w \in A-B}\psi_{AB}^w(\V),\\
\nabla \text{ip}^\times_{AB}(\V)&=\phi_{AB}(\V) \sum \limits_{w \in A-B}\nabla \psi^w_{AB}(\V)
+\left( \sum \limits_{w \in A-B} \psi_{AB}^w(\V) \right) \cdot \nabla \phi_{AB}(\V).
\end{aligned}
$$
Since 
$$
\DCS(\V)=\log(\text{ip}^\times_{\X\plusOne\X\plusOne}(\V))+\log(\text{ip}^\times_{\X\minusOne\X\minusOne}(\V))-2\log(\text{ip}^\times_{\X\plusOne\X\minusOne}(\V)),
$$
we finally get
$$
\begin{aligned}
\nabla \DCS(\V)=&\tfrac{1}{\text{ip}^\times_{\X\plusOne\X\plusOne}(\V)} \nabla \text{ip}^\times_{\X\plusOne\X\plusOne}(\V) + \tfrac{1}{\text{ip}^\times_{\X\minusOne\X\minusOne}(\V)} \nabla \text{ip}^\times_{\X\minusOne\X\minusOne}(\V)\\
&- 2 \tfrac{1}{\text{ip}^\times_{\X\plusOne\X\minusOne}(\V)} \nabla\text{ip}^\times_{\X\plusOne\X\minusOne}(\V).
\end{aligned}
$$
Given 
$$
\begin{aligned}
\mathrm{MELM}(\V) &= \DCS(\V) - \|\V^T\V - I \|^2,\\
\nabla \mathrm{MELM}(\V) &= \nabla \DCS(\V) - (4\V\V^T\V - 4\V),
\end{aligned}
$$
 one can run any first-order optimization 
method to find vectors $\V$ spanning $k$-dimensional subspace $\manifold$ representing low-dimensional, discriminative manifold of the input space.

\section{Experiments}

We use ten binary classification datasets from UCI repository~\cite{UCI} and libSVM repository~\cite{chang2011libsvm}, which are briefly summarized in Table~\ref{tab:summary}. These are moderate size problems.

Code was written in Python with the use of scikit-learn~\cite{pedregosa2011scikit}, numpy~\cite{van2011numpy} and scipy. Besides MELM we use 8 other linear dimensionality reduction techniques, namely: Principal Component Analysis (PCA), class PCA (cPCA\footnote{cPCA uses sum of each classes covariances, weighted by classes sizes, instead of whole data covariance.}), two ellipsoid PCA (2ePCA\footnote{2ePCA is cPCA without weights, so it is a balanced counterpart.}), per class PCA (pPCA\footnote{pPCA uses as $\V_i$ the first principal component of $i$th class.}), Independent Component Analysis (ICA), Factor Analysis (FA), Nonnegative Matrix Factorization (NMF\footnote{In order to use NMF we first transform dataset so it does not contain negative values.}), Disriminative Learning using Generalized Eigenvectors (GEM~\cite{GEM}). PCA, ICA, NMF and FA are implemented in scikit-learn, cPCA, pPCA and 2ePCA were coded by authors and for GEM we use publically available code\footnote{forked at 
\url{http://gist.github.com/lejlot/3ab46c7a249d4f375536}}. Implementation of MELM as a model compatible with scikit-learn classifiers and transformers is available both in supplementary materials and online\footnote{\url{http://github.com/gmum/melm}}.

\begin{table}[h]
\begin{center}
\begin{tabular}{lrrrrrrrr}
\toprule
dataset & $N$ & $d$ & $|\X\minusOne|$ & $|\X\plusOne|$ & $\hat m$ & $d^{.95}$ & $d_-^{.95}$ & $d_+^{.95}$\\
\midrule
australian & 690 & 14 & 383 & 307 & 0.80 & 1 & 2 & 1 \\
breast-cancer & 683 & 10 & 444 & 239 & 1.00 & 1 & 1 & 1 \\
diabetes & 768 & 8 & 268 & 500 & 0.88 & 2 & 2 & 3 \\
fourclass & 862 & 2 & 555 & 307 & 1.00 & 2 & 2 & 2 \\
german.numer & 1000 & 24 & 700 & 300 & 0.75 & 3 & 3 & 3 \\
heart & 270 & 13 & 150 & 120 & 0.75 & 3 & 3 & 3 \\
ionosphere & 351 & 34 & 126 & 225 & 0.88 & 24 & 26 & 7 \\
liver-disorders & 345 & 6 & 145 & 200 & 1.00 & 3 & 3 & 3 \\
sonar & 208 & 60 & 111 & 97 & 1.00 & 28 & 24 & 24 \\
splice & 1000 & 60 & 483 & 517 & 1.00 & 55 & 52 & 54 \\
\bottomrule
\end{tabular}
\caption{Summary of used datasets. $N$ denote number of points, $d$ dimensionality, $|\X_l|$ number of samples with $l$ label, $\bar m$ mean density (number of nonzero elements) and $d^t_l$ denotes number of dimensions which we have to include during PCA to keep $t$ of label $l$ variance.}
\label{tab:summary}
\end{center}
\end{table}

%

In order to estimate how hard to optimize is the MELM objective function we plot in Fig.~\ref{fig:starts} histograms of $\DCS$ values obtained during 500 random starts for each of the dataset. First, one can easily notice that $\DCS$ have multiple local extrema (see for example \emph{heart} or \emph{liver-disorders} histograms). It also appears that in some of the considered datasets it is not easy to obtain maximum by the use of completely random starting point (see \emph{ionosphere} and \emph{australian} datasets), which suggests that one should probably use some more advanced initialization techniques.
\begin{figure}[h]
\begin{center}
\includegraphics[width=0.18\textwidth]{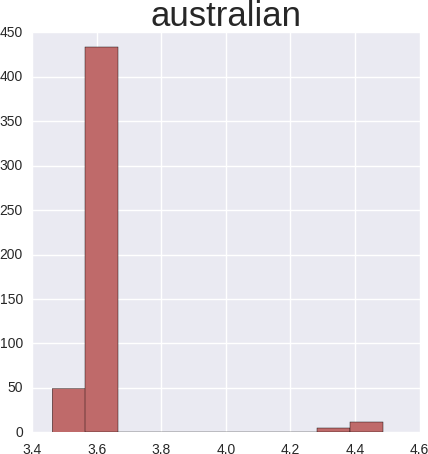}
\includegraphics[width=0.18\textwidth]{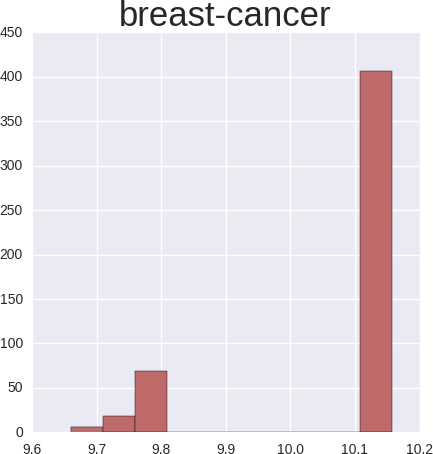}
\includegraphics[width=0.18\textwidth]{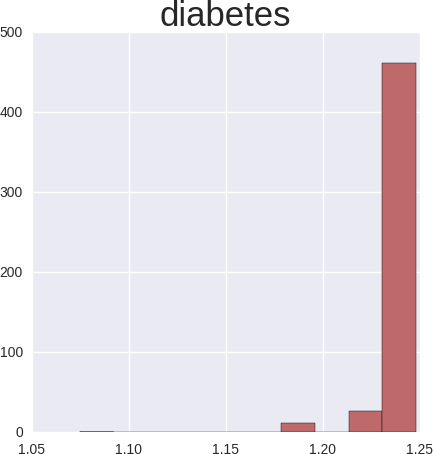}
\includegraphics[width=0.18\textwidth]{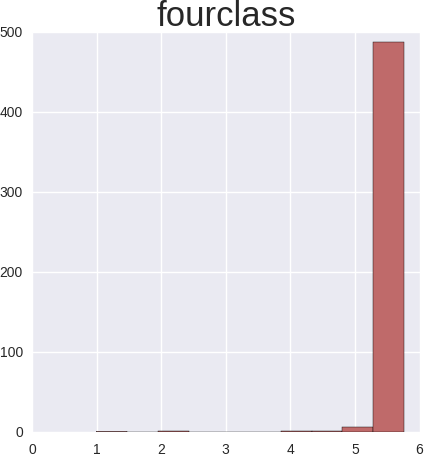}
\includegraphics[width=0.18\textwidth]{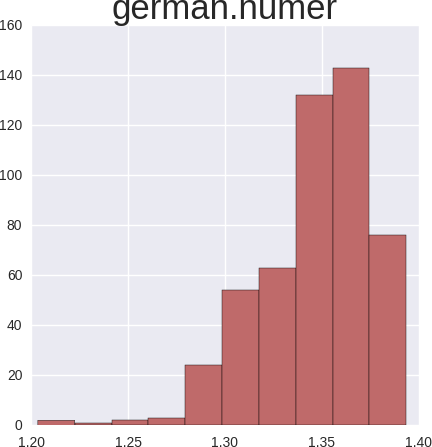}\\
\includegraphics[width=0.18\textwidth]{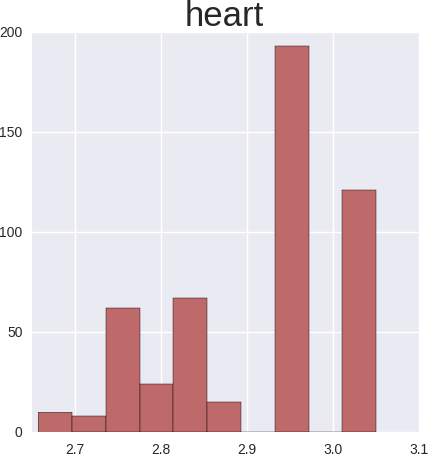}
\includegraphics[width=0.18\textwidth]{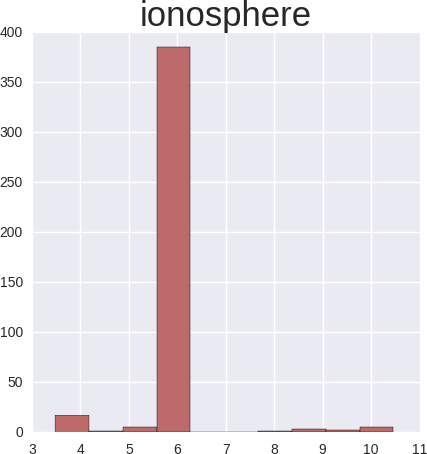}
\includegraphics[width=0.18\textwidth]{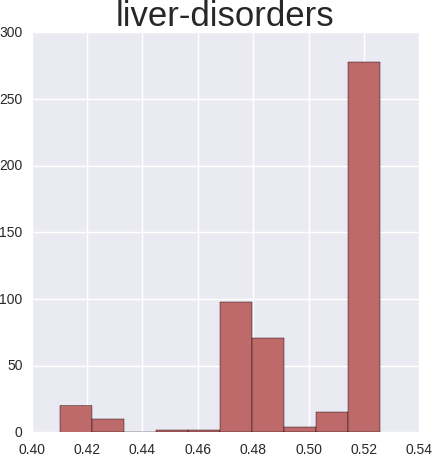}
\includegraphics[width=0.18\textwidth]{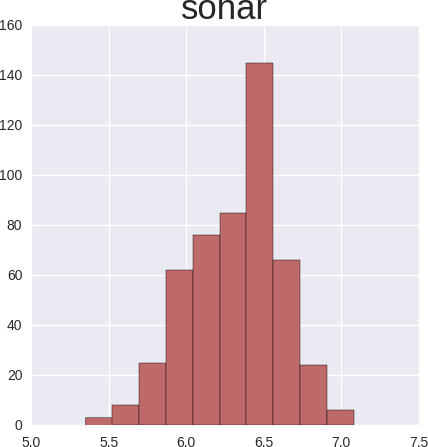}
\includegraphics[width=0.18\textwidth]{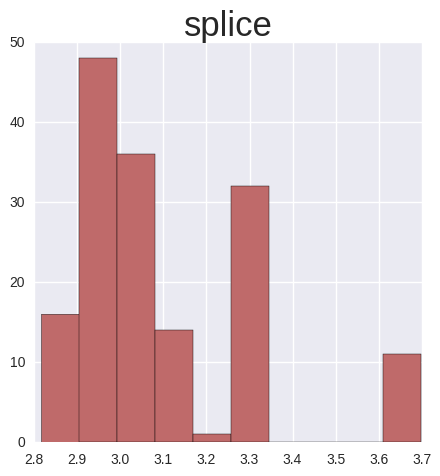}
\caption{Histograms of $\DCS$ values obtained for each dataset during $500$ random starts using L-BFGS.}
\label{fig:starts}
\end{center}
\end{figure}

To further investigate how hard it is to find a good solution when selecting maximum of $\DCS$ we estimate the expected value of $\DCS$ after $s$ random starts from matrices $V^{(1)}, \dots, V^{(s)}$
$$
\mathbb{E}[\max_{\V = \V^{(1)},\dots,\V^{(s)}} \DCS(\text{L-BFGS}(\text{MELM}| \V))].
$$
As one can see on Fig.~\ref{fig:iters} for 8 out of 10 considered datasets one can expect to find the maximum (with 5\% error) after just 16 random starts. Obviously this cannot be used as a general heuristics as it is heavily dependent on the dataset size, dimensionality as well as its discriminativness. However, this experiment shows that for moderate size problems (hundreds to thousands samples with dozens of dimensions) MELM can be relatively easily optimized even though it is a rather complex function with possibly many local maxima.
\begin{figure}[h]
\begin{center}
\includegraphics[width=0.5\textwidth]{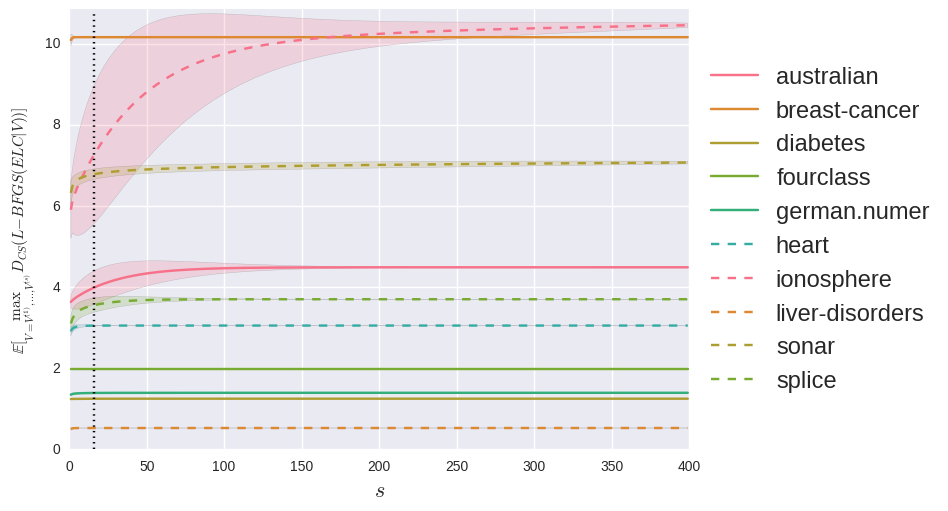}
\includegraphics[width=0.43\textwidth]{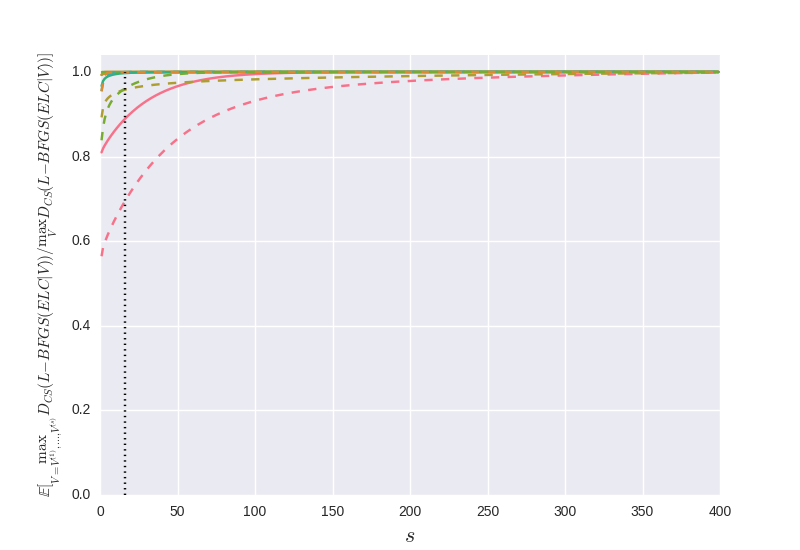}
\caption{Expected value of Cauchy-Schwarz Divergence after MELM optimization for $s$ random starts using L-BFGS algorithm (on the left) and its ratio to the maximum obtainable Cauchy-Schwarz Divergence (on the right). Dotted black line shows 16 starts threshold. }
\label{fig:iters}
\end{center}
\end{figure}

It is worth noting that truly complex optimization problem is only given by \emph{ionosphere} dataset. One can refer to Table~\ref{tab:summary} to see that this is a very specific problem where positive class is located in a very low dimensional linear manifold (approximately 7 dimensional) while the negative class is scattered over nearly 4 times more dimensions. 

We check how well MELM behaves when used in a classification pipeline. There are two main reasons for such approach, first if the discriminative manifold is low-dimensional, searching for it may boost the classification accuracy. Second, even if it decreases classification score as compared to non-linear methods applied directly in the input space, the resulting model will be much simpler and more robust. For comparison consider training a RBF SVM in $\R^{60}$ using $1000$ data points. It is a common situation when SVM selects large part of the dataset as the support vectors~\cite{suykens2002least},~\cite{wang2005support}, meaning that the classification of the new point requires roughly $500 \cdot 60 = 30000$ operations. In the same time if we first embed space in a plane and fit RBF SVM there we will build a model with much less support vectors (as the 2D decision boundary generally is not as complex as 60-dimensional one), lets say $100$ and consequently we will need $60 \cdot 2 + 2 \cdot 100 = 120 + 200 
= 
320$ operations, two orders of 
magnitude faster.
 Whole pipeline is composed of:
\begin{enumerate}
 \item Splitting dataset into training $\X\minusOne, \X\plusOne$ and testing $\hat \X\minusOne, \hat \X\plusOne$.
 \item Finding plane embeding matrix $\V \in \mathbb{R}^{d \times 2}$ using tested method.
 \item Training a classifier $cl$ on $\V^T \X\minusOne, \V^T \X\plusOne$.
 \item Testing $cl$ on $\V^T \hat \X\minusOne, \V^T \hat \X\plusOne$.
\end{enumerate}

Table~\ref{tab:2d} summarizes BAC scores obtained by each method on each of the considered datasets in 5-fold cross validation. For the classifier module we used SVM RBF, KNN and KDE-based density classification. Each of them was fitted using internal cross-validation to find the best parameters. GEM and MELM $\gamma$ hyperperameters were also fitted. Reported results come from the best classifier. 

\begin{sidewaystable}[!htbp]
\begin{center}
\begin{tabular}{lrrrrrrrrrrr}
\toprule
  & MELM & FA & ICA & GEM & NMF & 2ePCA & cPCA & PCA & pPCA & I & \\
\midrule
australian & \textbf{0.866} & 0.847 & 0.829 & 0.791 & 0.817 & 0.764 & 0.756 & 0.825 & 0.769 & 0.860 & \\
breast-cancer & \textbf{0.976} & 0.973 & \textbf{0.976} & 0.930 & \textbf{0.976} & 0.966 & 0.967 & \textbf{0.976} & 0.961 & 0.966 & \\
diabetes & \textbf{0.744} & 0.682 & 0.705 & 0.637 & 0.704 & 0.689 & 0.695 & 0.705 & 0.646 & 0.728 & \\
fourclass & \textbf{1.0} & 0.720 & \textbf{1.0} & \textbf{1.0} & 0.999 & \textbf{1.0} & \textbf{1.0} & \textbf{1.0} & \textbf{1.0} & \textbf{1.0} & \\
german.numer & \textbf{0.705} & 0.588 & 0.648 & 0.653 & 0.63 & 0.588 & 0.602 & 0.650 & 0.619 & \textbf{0.728} & \\
heart & \textbf{0.831} & 0.792 & 0.818 & 0.675 & 0.811 & 0.793 & 0.782 & 0.817 & 0.787 & \textbf{0.837} & \\
ionosphere & \textbf{0.892} & 0.794 & 0.757 & 0.763 & 0.799 & 0.783 & 0.780 & 0.757 & 0.826 & \textbf{0.944} & \\
liver-disorders & \textbf{0.710} & 0.546 & 0.545 & 0.681 & 0.553 & 0.531 & 0.548 & 0.531 & 0.557 & 0.705 & \\
sonar & 0.766 & 0.558 & 0.600 & \textbf{0.889} & 0.657 & 0.593 & 0.575 & 0.600 & 0.676 & 0.862 & \\
splice & \textbf{0.862} & 0.718 & 0.697 & 0.799 & 0.691 & 0.686 & 0.686 & 0.697 & 0.694 & \textbf{0.887} & \\
\bottomrule
\end{tabular}
\end{center}
\caption{ Comparison of 2-dimensional reduction followed by the classifier. I stands for Identity, meaning that we simply trained classifiers directly on the raw data, without any dimensionality reduction. Bold values indicate the best score obtained across all dimensionality reduction pipelines. If the classifier trained on raw data is better than any of the reduced models than its score is also bolded.}
\label{tab:2d}
\end{sidewaystable}

In four datasets, MELM based embeding led to the construction of better classifier than both other dimensionality reduction techniques as well as training models on raw data. This suggests that for these datasets the discriminative manifold is truly at most 2-dimensional. At the same time in nearly all (besides \emph{sonar}) datasets the pipeline consisting of MELM yielded significantly better classification results than any other embeding considered.

One of the main applications of MELM is to visualize the dataset through linear projection in such a way that classes do not overlap. One can see comparisons of \emph{heart} dataset projections using all considered approaches in Fig.~\ref{fig:2d}. As one can notice our method finds plane projection where classes are nearly perfectly discriminated. Interestingly, this separation is only obtainable in two dimensions, as neither marginal distributions nor any other one-dimensional projection can construct such separation.

\begin{figure}[!htbp]
\begin{center}
\includegraphics[width=0.32\textwidth]{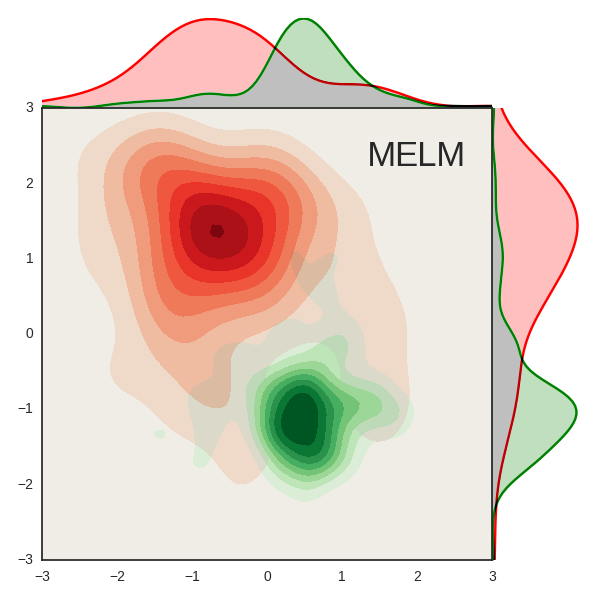}
\includegraphics[width=0.32\textwidth]{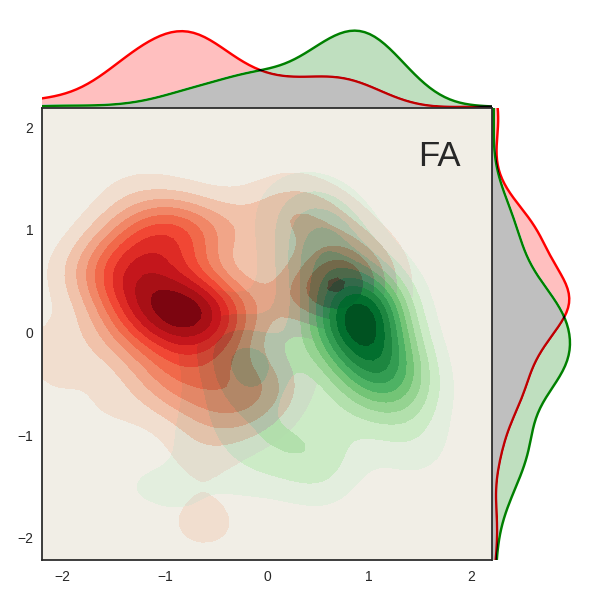}
\includegraphics[width=0.32\textwidth]{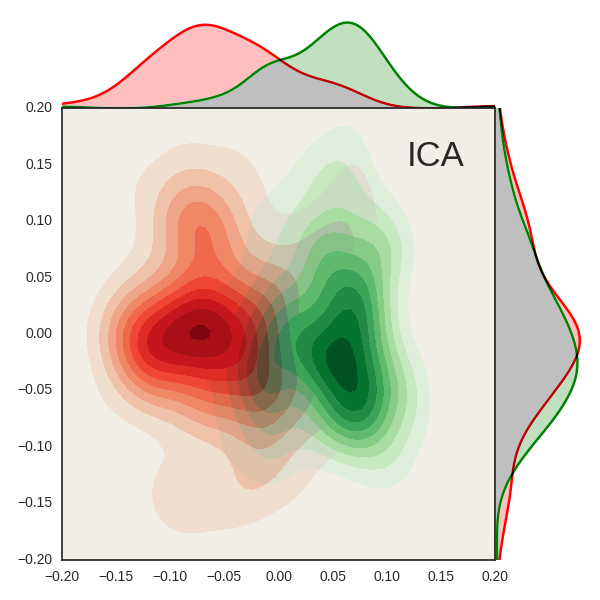}\\
\includegraphics[width=0.32\textwidth]{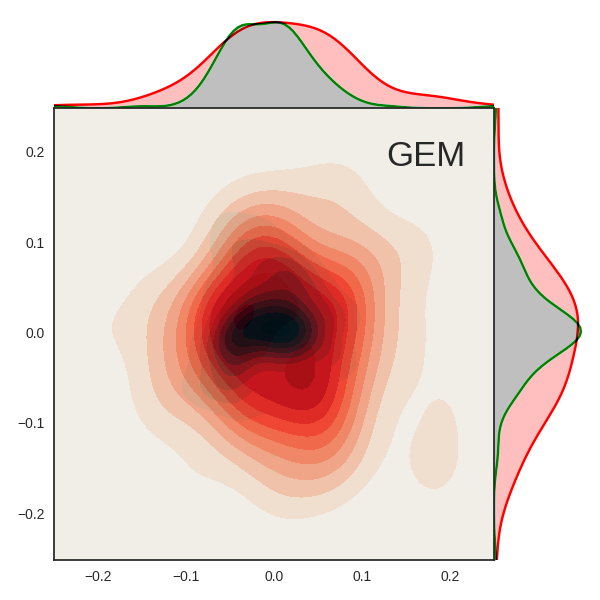}
\includegraphics[width=0.32\textwidth]{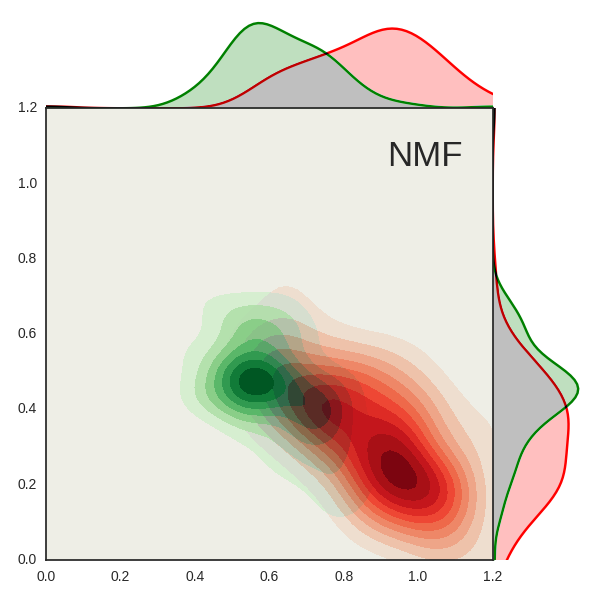}
\includegraphics[width=0.32\textwidth]{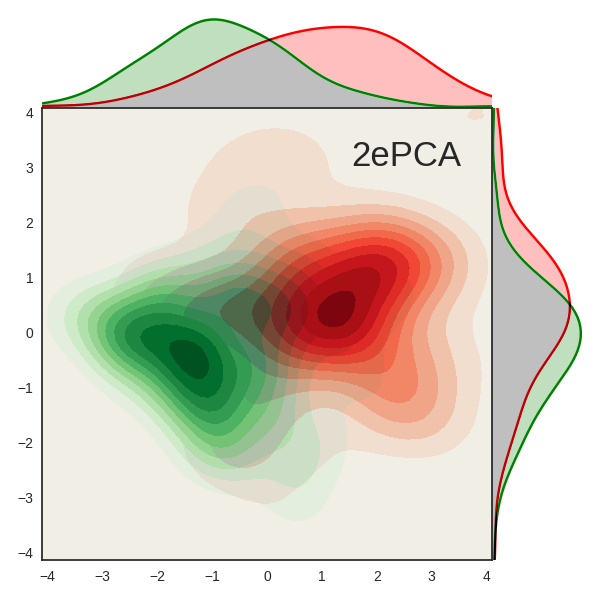}\\
\includegraphics[width=0.32\textwidth]{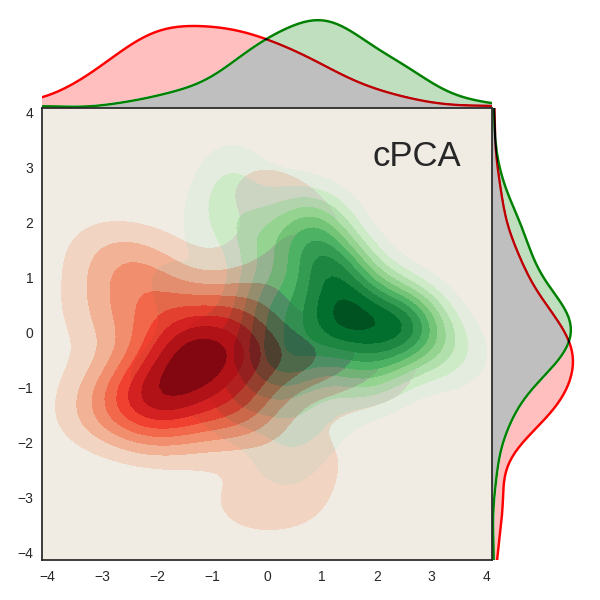}
\includegraphics[width=0.32\textwidth]{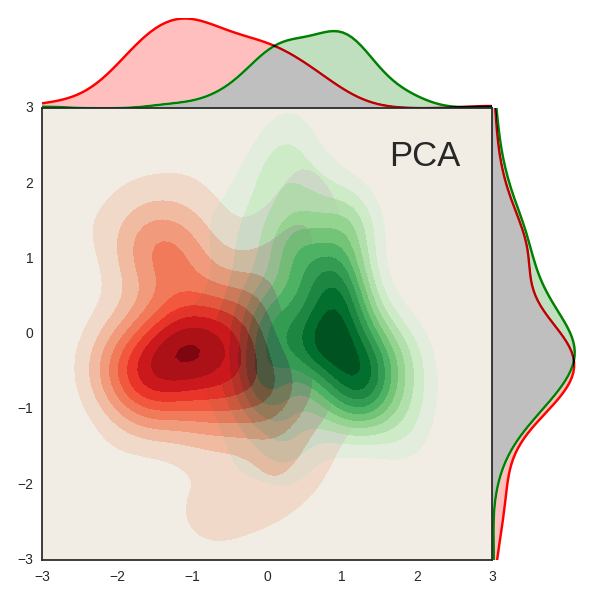}
\includegraphics[width=0.32\textwidth]{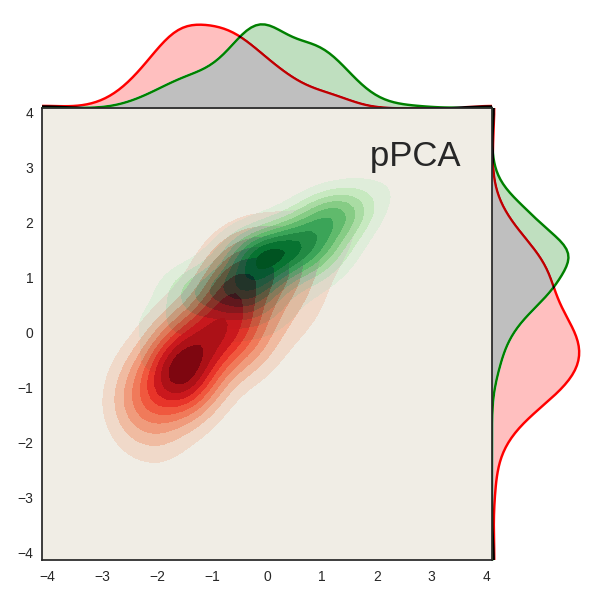}
\end{center}
\caption{Comparison of heart dataset 2D projections by analyzed methods. Visualization uses kernel density estimation.}
\label{fig:2d} 
\end{figure}

While visual inspection is crucial for such tasks, to truly compare competetive methods we need some metric to measure quality of the visualization. 
In order to do so, we propose to assign a \emph{visual separability} score as the mean BAC score over three considered classifiers (SVM RBF, KNN, KDE) 
trained and tested in 5-fold cross validation of the projected data. The only difference between this test and the previous one is that we use whole 
data to find a projection (so each projection technique uses all datapoints) and only further visualization testing is performed using train-test splits. 
This way we can capture "how easy to discriminate are points in this projection" rather than "how useful for data discrimination is using the projection". 
Experiments are repeated using various random subsets of samples and mean results are reported.

\begin{sidewaystable}[!htbp]
\begin{center}
\begin{tabular}{lrrrrrrrrrrr}
\toprule
  & MELM & FA & ICA & GEM & NMF & 2ePCA & cPCA & PCA & pPCA & \\
\midrule
australian & \textbf{0.888} & 0.856 & 0.845 & 0.792 & 0.838 & 0.782 & 0.781 & 0.845 & 0.792 & \\
breast-cancer & \textbf{0.985} & 0.979 & 0.979 & 0.942 & 0.979 & 0.967 & 0.969 & 0.978 & 0.965 & \\
diabetes & \textbf{0.806} & 0.732 & 0.737 & 0.691 & 0.737 & 0.734 & 0.733 & 0.734 & 0.697 & \\
fourclass & \textbf{0.988} & 0.665 & \textbf{0.988} & \textbf{0.988} & \textbf{0.988} & \textbf{0.988} & \textbf{0.988} & \textbf{0.988} & \textbf{0.988} & \\
german.numer & \textbf{0.819} & 0.640 & 0.687 & 0.686 & 0.672 & 0.665 & 0.657 & 0.686 & 0.692 & \\
heart & \textbf{0.918} & 0.822 & 0.834 & 0.751 & 0.839 & 0.787 & 0.783 & 0.833 & 0.799 & \\
ionosphere & \textbf{0.990} & 0.810 & 0.798 & 0.763 & 0.849 & 0.804 & 0.814 & 0.798 & 0.863 & \\
liver-disorders & \textbf{0.763} & 0.682 & 0.659 & 0.707 & 0.698 & 0.691 & 0.676 & 0.688 & 0.715 & \\
sonar & \textbf{0.996} & 0.714 & 0.717 & 0.892 & 0.729 & 0.702 & 0.709 & 0.717 & 0.735 & \\
splice & \textbf{0.927} & 0.738 & 0.724 & 0.829 & 0.716 & 0.717 & 0.718 & 0.723 & 0.742 & \\
\bottomrule
\end{tabular}
\end{center}
\caption{ Comparison of 2-dimensional projections discriminativeness.}
\label{tab:vis}
\end{sidewaystable}

During these experiments MELM achieved essentially better scores than any other tested method (see Table~\ref{tab:vis}). 
Solutions were about 10\% better under our metric and this difference is consistent over all considered datasets. 
In other words MELM finds two-dimensional representations of our data using just linear projection where classes overlap to a significantly smaller degree than using PCA, cPCA, 2ePCA, pPCA, ICA, NMF, FA or GEM. It is also worth noting that Factor Analysis, as the only method which does not require orthogonality of resulting projection vectors did a really bad job while working with fourclass data even though these samples are just two-dimensional.

\section{Conclusions}

In this paper we construct Maximum Entropy Linear Manifold (MELM), a method of learning discriminative low-dimensional representation which can be used for both classification purposes
as well as a visualization preserving classes separation. Proposed model has important theoretical properties including affine transformations invariance, connections
with PCA as well as bounding the expected balanced misclassification error. During evaluation we show that for moderate size problems MELM can be efficiently
optimized using simple first-order optimization techniques. Obtained results confirm that such an approach leads to highly discriminative transformation, better
than obtained by 8 compared solutions.

\subsubsection*{Acknowledgments.} The work has been partially financed by National Science Centre Poland grant no. 2014/13/B/ST6/01792.

\end{document}